\newcommand{\mb}[1]{\mathbf{#1}}
\newtheorem{theorem}{Theorem}[section]
\title{A bio-inspired bistable recurrent cell allows for long-lasting memory}
\author{Nicolas Vecoven \\ University of Liège \\ nvecoven@uliege.be \And Damien Ernst\\ University of Liège  \And Guillaume Drion\\ University of Liège}
\date{February 2020}
\begin{document}

\maketitle

\begin{abstract}
Recurrent neural networks (RNNs) provide state-of-the-art performances in a wide variety of tasks that require memory. These performances can often be achieved thanks to gated recurrent cells such as gated recurrent units (GRU) and long short-term memory (LSTM). Standard gated cells share a layer internal state to store information at the network level, and long term memory is shaped by network-wide recurrent connection weights.
Biological neurons on the other hand are capable of holding information at the cellular level for an arbitrary long amount of time through a process called bistability. Through bistability, cells can stabilize to different stable states depending on their own past state and inputs, which permits the durable storing of past information in neuron state. In this work, we take inspiration from biological neuron bistability to embed RNNs with long-lasting memory at the cellular level. This leads to the introduction of a new bistable biologically-inspired recurrent cell that is shown to strongly improves RNN performance on time-series which require very long memory, despite using only cellular connections (all recurrent connections are from neurons to themselves, i.e. a neuron state is not influenced by the state of other neurons). Furthermore, equipping this cell with recurrent neuromodulation permits to link them to standard GRU cells, taking a step towards the biological plausibility of GRU.
\end{abstract}

\section{Introduction} 
Recurrent neural networks (RNNs) have been widely used in the past years, providing excellent performances on many problems requiring memory such as e.g. sequence to sequence modeling, speech recognition, and neural translation. These achievements are often the result of the development of the long short-term memory (LSTM \cite{lstm}) and gated recurrent units (GRU \cite{gru}) recurrent cells, which allow RNNs to capture time-dependencies over long horizons. Despite all the work analyzing the performances of such cells \cite{rnnevaluation}, recurrent cells remain predominantly black-box models. There has been some advance in understanding the dynamical properties of RNNs as a whole from a non-linear control perspective (\cite{control_crnn}), but little has been done in understanding the underlying system of recurrent cells themselves. Rather, they have been built for their robust mathematical properties when computing gradients with back-propagation through time (BPTT). Research on new recurrent cells is still ongoing and, building up on LSTM and GRU, recent works have proposed other types of gated units (\cite{mgu}, \cite{gruvariants}, \cite{gated_orthogonal}). In addition, an empirical search over hundreds of different gated architectures has been carried in \cite{empiricalrnn}.

In parallel, there has been an increased interest in assessing the biological plausibility of neural networks. There has not only been a lot of interest in spiking neural networks (\cite{spikingdeep, spikingdeep2, lstminspiking}), but also in reconciling more traditional deep learning models with biological plausibility (\cite{towardsplausible,plausiblernnlearning,biologicallyinspiredbptt}). RNNs are a promising avenue for the latter (\cite{recurrentneuroscience}) as they are known to provide great performances from a deep learning point of view while theoretically allowing a discrete dynamical simulation of biological neurons. 

RNNs combine simple cellular dynamics and a rich, highly recurrent network architecture. The recurrent network architecture enables the encoding of complex memory patterns in the connection weights. These memory patterns rely on global feedback interconnections of large neuronal populations. Such global feedback interconnections are difficult to tune, and can be a source of vanishing or exploding gradient during training, which is a major drawback of RNNs. In biological networks, a significant part of advanced computing is handled at the cellular level, mitigating the burden at the network level. Each neuron type can switch between several complex firing patterns, which include e.g. spiking, bursting, and bistability. In particular, bistability is the ability for a neuron to switch between two stable outputs depending on input history. It is a form of cellular memory (\cite{marder}). 

In this work, we propose a new biologically motivated bistable recurrent cell (BRC), which embeds classical RNNs with local cellular memory rather than global network memory. More precisely, BRCs are built such that their hidden recurrent state does not directly influence other neurons (i.e. they are not recurrently connected to other cells). To make cellular bistability compatible with the RNNs feedback architecture, a BRC is constructed by taking a feedback control perspective on biological neuron excitability (\cite{drioncontrole}). This approach enables the design of biologically-inspired cellular dynamics by exploiting the RNNs structure rather than through the addition of complex mathematical functions. 

To test the capacities of cellular memory, the bistable cells are first connected in a feedforward manner, getting rid of the network memory coming from global recurrent connections. Despite having only cellular temporal connections, we show that BRCs provide good performances on standard benchmarks and surpass more standard ones such as LSTMs and GRUs on benchmarks with datasets composed of extremely sparse time-series. Secondly, we show that the proposed bio-inspired recurrent cell can be made more comparable to a standard GRU by using a special kind of recurrent neuromodulation. We call this neuromodulated bistable recurrent cell nBRC, standing for neuromodulated BRC. The comparison between nBRCs and GRUs provides food-for-thought and is a step towards reconciling traditional gated recurrent units and biological plausibility.

\section{Recurrent neural networks and gated recurrent units}
 RNNs have been widely used to tackle many problems having a temporal structure. In such problems, the relevant information can only be captured by processing observations obtained during multiple time-steps. More formally, a time-series can be defined as $\mb{X} = [\mb{x}_0,\ldots,\mb{x}_T]$ with $T \in \mathcal{N}_0$ and $\mb{x}_i \in \mathcal{R}^n$. To capture time-dependencies, RNNs maintain a recurrent hidden state whose update depends on the previous hidden state and current observation of a time-series, making them dynamical systems and allowing them to handle arbitrarily long sequences of inputs. Mathematically, RNNs maintain a hidden state $\mb{h}_t = f(\mb{h}_{t-1},\mb{x}_t;\theta)$, where $\mb{h}_0$ is a constant and $\theta$ are the parameters of the network. In its most standard form, an RNN updates its state as follows:
\begin{equation}
    \mb{h}_t = g(U\mb{x}_t + W\mb{h}_{t-1})
    \label{eq:srnn}
\end{equation}
where $g$ is a standard activation function such as a sigmoid or a hyperbolic tangent. However, RNNs using Equation~\ref{eq:srnn} as the update rule are known to be difficult to train on long sequences due to vanishing (or, more rarely, exploding) gradient problems. To alleviate this problem, more complex recurrent update rules have been proposed, such as LSTMs (\cite{lstm}) and GRUs (\cite{gru}). These updates allow recurrent networks to be trained on much longer sequences by using gating principles. By way of illustration, the updates related to a gated recurrent unit are
\begin{align}
    \begin{cases}
        \mb{z}_t = \sigma(U_z\mb{x}_t + W_z\mb{h}_{t-1}) \\
        \mb{r}_t = \sigma(U_r\mb{x}_t + W_r\mb{h}_{t-1}) \\
        \mb{h}_t = \mb{z}_t\odot\mb{h}_{t-1} + (1 - \mb{z}_t)\odot \tanh(U_h\mb{x}_t + \mb{r}_t \odot W_h\mb{h}_{t-1}) 
    \end{cases}
    \label{eq:gru}
\end{align}
where $\mb{z}$ is the update gate (used to tune the update speed of the hidden state with respect to new inputs) and $\mb{r}$ is the reset gate (used to reset parts of the memory).

\section{Neuronal bistability: a feedback viewpoint}
\label{sec:bio}
Biological neurons are intrinsically dynamical systems that can exhibit a wide variety of firing patterns. In this work, we focus on the control of bistability, which corresponds to the coexistence of two stable states at the neuronal level. Bistable neurons can switch between their two stable states in response to transient inputs (\cite{marder, drion15}), endowing them with a kind of never-fading cellular memory (\cite{marder}). 

Complex neuron firing patterns are often modeled by systems of ordinary differential equations (ODEs). Translating ODEs into an artificial neural network algorithm often leads to mixed results due to increased complexity and the difference in modeling language. Another approach to model neuronal dynamics is to use a control systems viewpoint \cite{drioncontrole}. In this viewpoint, a neuron is modeled as a set of simple building blocks connected using a multiscale feedback, or recurrent, interconnection pattern. 

A neuronal feedback diagram focusing on one time-scale, which is sufficient for bistability, is illustrated in Figure~\ref{fig:control_diagram}A. The block $1/(Cs)$ accounts for membrane integration, $C$ being the membrane capacitance and $s$ the complex frequency. The outputs from presynaptic neurons $V_{pre}$ are combined at the input level to create a synaptic current $I_{syn}$.  Neuron-intrinsic dynamics are modeled by the negative feedback interconnection of a nonlinear function $I_{int}=f(V_{post})$, called the IV curve in neurophysiology, which outputs an intrinsic current $I_{int}$ that adds to $I_{syn}$ to create the membrane current $I_m$. The slope of $f(V_{post})$ determines the feedback gain, a positive slope leading to negative feedback and a negative slope to positive feedback. $I_m$ is then integrated by the postsynaptic neuron membrane to modify its output voltage $V_{post}$.


\begin{figure}[h]
    \centering
    \includegraphics[width = 0.9\textwidth]{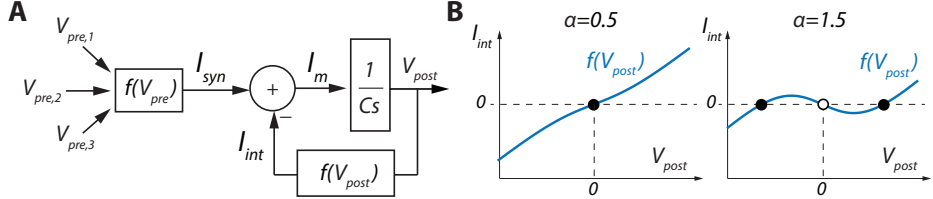}
    \caption{\textbf{A.} One timescale control diagram of a neuron. \textbf{B.} Plot of the function $I_{int} = V_{post} -\alpha \tanh(V_{post})$ for two different values of $\alpha$. Full dots correspond to stable states, empty dots to unstable states.}
    \label{fig:control_diagram}
\end{figure}

The switch between monostability and bistability is achieved by shaping the nonlinear function $I_{int}=f(V_{post})$ (Figure~\ref{fig:control_diagram}B). The neuron is monostable when $f(V_{post})$ is monotonic of positive slope (Figure~\ref{fig:control_diagram}B, left). Its only stable state corresponds to the voltage at which $I_{int}=0$ in the absence of synaptic inputs (full dot). The neuron switch to bistability through the creation of a local region of negative slope in $f(V_{post})$ (Figure~\ref{fig:control_diagram}B, left). Its two stable states correspond to the voltages at which $I_{int}=0$ with positive slope (full dots), separated by an unstable state where $I_{int}=0$ with negative slope (empty dot). The local region of negative slope corresponds to a local positive feedback where the membrane voltage is unstable.

In biological neurons, a local positive feedback is provided by regenerative gating, such as sodium and calcium channel activation or potassium channel inactivation (\cite{drion15, drion13}). The switch from monostability to bistability can therefore be controlled by tuning ion channel density. This property can be emulated in electrical circuits by combining transconductance amplifiers to create the function
\begin{equation}
    I_{int} = V_{post} -\alpha \tanh(V_{post}),
    \label{eq:bist}
\end{equation}
where the switch from monostability to bistability is controlled by a single parameter $\alpha$ (\cite{ribar}). $\alpha$ models the effect of sodium or calcium channel activation, which tunes the local slope of the function, hence the local gain of the feedback loop (Figure~\ref{fig:control_diagram}B). For $\alpha \in ]0,1]$, which models a low sodium or calcium channel density, the function is monotonic, leading to monostability (Figure~\ref{fig:control_diagram}B, left). For $\alpha \in ]1,+\infty[$, which models a high sodium or calcium channel density, a region of negative slope is created around $V_{post}=0$, and the neuron becomes bistable (Figure~\ref{fig:control_diagram}B, right). This bistability leads to never-fading memory, as in the absence of significant input perturbation the system will remain indefinitely in one of the two stable states depending on the input history.

Neuronal bistability can therefore be modeled by a simple feedback system whose dynamics is tuned by a single feedback parameter $\alpha$. This parameter can switch between monostability and bistability by tuning the shape of the feedback function $f(V_{post})$, whereas neuron convergence dynamics is controlled by a single feedforward parameter $C$. In biological neurons, both these parameters can be modified dynamically by other neurons via a mechanism called neuromodulation, providing a dynamic, controllable memory at the cellular level. The key challenge is to find an appropriate mathematical representation of this mechanism to be efficiently used in artificial neural networks, and, more particularly, in RNNs.

\section{Cellular memory, bistability and neuromodulation in RNNs}
\paragraph{The bistable recurrent cell (BRC)}
To model controllable bistability in RNNs, we start by drawing two main comparisons between the feedback structure Figure~\ref{fig:control_diagram}A and the GRU equations (Equation~\ref{eq:gru}). First, we note that the reset gate $r$ has a role that is similar to the one played by the feedback gain $\alpha$ in Equation~\ref{eq:bist}. In GRU equations, $r$ is the output of a sigmoid function, which implies $r \in ]0,1[$. These possible values for $r$ correspond to negative feedback only, which does not allow for bistability. The update gate $z$, on the other hand, has a role similar to that of the membrane capacitance $C$. Second, one can see through the matrix multiplications $W_z\mb{h}_{t-1}$, $W_r\mb{h}_{t-1}$ and $W_h\mb{h}_{t-1}$ that each cell uses the internal state of other neurons to compute its own state without going through synaptic connections. In biological neurons, the intrinsic dynamics defined by $I_{int}$ is constrained to only depend on its own state $V_{post}$, and the influence of other neurons comes only through the synaptic compartment ($I_{syn}$), or through neuromodulation.



To enforce this cellular feedback constraint in GRU equations and to endow them with bistability, we propose to update $h_t$ as follows: 
\begin{align}
        \mb{h}_t = \mb{c}_t\odot\mb{h}_{t-1} + (1-\mb{c}_t)\odot \tanh(U\mb{x}_t + \mb{a}_t \odot \mb{h}_{t-1})
    \label{eq:brc}
\end{align}
where $\mb{a}_t = 1 + \tanh(U_a\mb{x}_t + \mb{w}_a\odot\mb{h}_{t-1})$ and $\mb{c}_t = \sigma(U_{c}\mb{x}_t + \mb{w}_{c}\odot\mb{h}_{t-1})$. $\mb{a}_t$ corresponds to the feedback parameter $\alpha$, with $\mb{a}_t \in ]0,2[$. $\mb{c}_t$ corresponds to the update gate in GRU and plays the role of the membrane capacitance $C$, determining the convergence dynamics of the neuron. We call this updated cell the bistable recurrent cell (BRC). 

The main differences between a BRC and a GRU are twofold. First, each neuron has its own internal state $\mb{h}_{t}$ that is not directly affected by the internal state of the other neurons. Indeed, due to the four instances of $\mb{h}_{t-1}$ coming from Hadamard products, the only temporal connections existing in layers of BRC are from neurons to themselves. This enforces the memory to be only cellular. Second, the feedback parameter $\mb{a}_t$ is allowed to take a value in the range $]0,2[$ rather than $]0,1[$. This allows the cell to switch between monostability ($a \leq 1$) and bistability ($a > 1$) (Figure~\ref{fig:brc}A,B). The proof of this switch is provided in Appendix~\ref{app:bist_proof}.

It is important to note that the parameters $\mb{a}_t$ and $\mb{c}_t$ are dynamic. $\mb{a}_t$ and $\mb{c}_t$ are neuromodulated by the previous layer, that is, their value depends on the output of other neurons. Tests were carried with $\mb{a}$ and $\mb{c}$ as parameters learned by stochastic gradient descent, which resulted in lack of representational power, leading to the need for neuromodulation. This neuromodulation scheme was the most evident as it maintains the cellular memory constraint and leads to the most similar update rule with respect to standard recurrent cells (Equation~\ref{eq:gru}). However, as will be discussed later, other neuromodulation schemes can be thought of. 

Likewise, from a neuroscience perspective, $\mb{a}_t$ could well be greater than $2$. Limiting the range of $\mb{a}_t$ to $]0,2[$ was made for numerical stability and for symmetry between the range of bistable and monostable neurons. We argue that this is not an issue as, for a value of $\mb{a}_t$ greater than $1.5$, the dynamics of the neurons become very similar (as suggested in Figure~\ref{fig:brc}A).
\begin{figure}[h]
\begin{center}
    \includegraphics[width=0.75\textwidth]{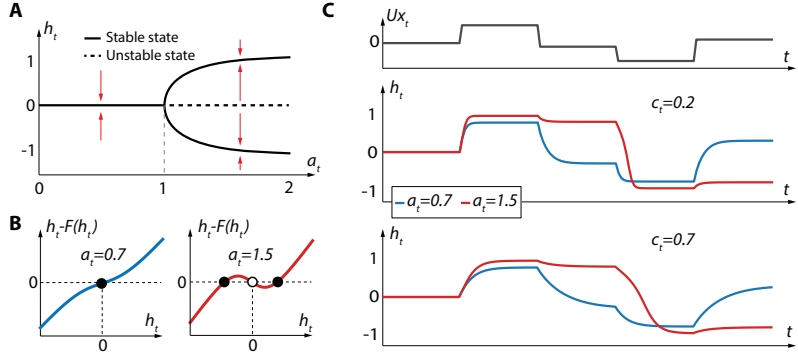}
\end{center}
\caption{\textbf{A.} Bifurcation diagram of Equation~\ref{eq:brc} for $U\mb{x}_t = 0$.~\textbf{B.} Plots of the function $h_t-F(h_t)$ for two values of $a_t$, where $F(h_t)= c_t h_{t} + (1-c_t)\tanh(a_th_{t})$ is the right hand side of Equation~\ref{eq:brc} with $x_t=0$. Full dots correspond to stable states, empty dots to unstable states.~\textbf{C.} Response of BRC to an input time-series for different values of $a_t$ and $c_t$.}
\label{fig:brc}
\end{figure}

Figure~\ref{fig:brc}C shows the dynamics of a BRC with respect to $a_t$ and $c_t$. For $a_t<1$, the cell exhibits a classical monostable behavior, relaxing to the $0$ stable state in the absence of inputs (blue curves in Figure~\ref{fig:brc}C). On the other hand, a bistable behavior can be observed for $a_t>1$: the cells can either stabilize on an upper stable state or a lower stable state depending on past inputs (red curves in Figure~\ref{fig:brc}C). 
Since these upper and lower stable states do not correspond to an $h_t$ which is equal to $0$, they can be associated with cellular memory that never fades over time.
Furthermore, Figure~\ref{fig:brc} also illustrates that neuron convergence dynamics depend on the value of $c$.


\paragraph{The recurrently neuromodulated bistable recurrent cell (nBRC)}
To further improve the performance of BRC, one can relax the cellular memory constraint. By creating a dependency of $a_t$ and $c_t$ on the output of other neurons of the layer, one can build a kind of recurrent layer-wise neuromodulation. We refer to this modified version of a BRC as an nBRC, standing for recurrently neuromodulated BRC. The update rule for the nBRC is the same as for BRC, and follows Equation~\ref{eq:brc}. The difference comes in the computation of $a_t$ and $c_t$, which are neuromodulated as follows:
\begin{align}
    \begin{cases}
        \mb{a}_t = 1 + \tanh(U_a\mb{x}_t + W_a\mb{h}_{t-1}) \\
        \mb{c}_t = \sigma(U_c\mb{x}_t + W_c\mb{h}_{t-1}) \\
    \end{cases}
    \label{eq:bgru}
\end{align}
The update rule of nBRCs being that of BRCs (Equation~\ref{eq:brc}), bistable properties are maintained and hence the possibility of a cellular memory that does not fade over time. However, the new recurrent neuromodulation scheme adds a type of network memory on top of the cellular memory. 
This recurrent neuromodulation scheme brings the update rule even closer to standard GRU. This is highlighted when comparing Equation~\ref{eq:gru} and Equation~\ref{eq:brc} with parameters neuromodulated following Equation~\ref{eq:bgru}. We stress that, as opposed to GRUs, bistability is still ensured through $a_t$ belonging to $]0,2[$. A relaxed cellular memory constraint is also ensured, as each neuron past state $h_{t-1}$ only directly influences its own current state and not the state of other neurons of the layer (Hadamard product on the $\mb{h}_t$ update in Equation~\ref{eq:brc}). This is important for numerical stability as the introduction of a cellular positive feedback for bistability leads to global instability if the update is computed using other neurons states directly (as it is done in the classical GRU update, see the matrix multiplication $W_h\mb{h}_{t-1}$ in Equation~\ref{eq:gru}).


Finally, let us note that to be consistent with the biological model presented in Section~\ref{sec:bio}, Equation~\ref{eq:bgru} should be interpreted as a way to represent a neuromodulation mechanism of a neuron by those from its own layer and the layer that precedes. Hence, the possible analogy between gates $z$ and $r$ in GRUs and neuromodulation. In this respect, studying the introduction of new types of gates based on more biological plausible neuromodulation architectures would certainly be interesting.


\section{Analysis of BRC and nBRC performance}

To demonstrate the performances of BRCs and nBRCs with respect to standard GRUs and LSTMs, we tackle three problems. The first is a one-dimensional toy problem, the second is a two-dimensional denoising problem and the third is the sequential MNIST problem. The supervised setting used is the same for all three benchmarks. The network is presented with a time-series and is asked to output a prediction (regression for the first two benchmarks and classification for the third) after having received the last element of the time-series $\mb{x}_T$. We show that the introduction of bistability in recurrent cells is especially useful for datasets in which only sparse time-series are available. In this section, we also take a look at the dynamics inside the BRC neurons in the context of the denoising benchmark and show that bistability is heavily used by the neural network.

\subsection{Results}
For the first two problems, training sets comprise $45000$ samples and performances are evaluated on test sets generated with $50000$ samples. For the MNIST benchmark, the standard train and test sets are used. All averages and standard deviations reported were computed over three different seeds. We found that there were very little variations in between runs, and thus believe that three runs are enough to capture the performance of the different architectures. For benchmark 1, networks are composed of two recurrent layers of $100$ neurons each whereas for benchmark 2 and 3, networks are composed of four recurrent layers of $100$ neurons each. Different recurrent cells are always tested on similar networks (i.e. same number of layers/neurons). We used the tensorflow (\cite{tensorflow}) implementation of GRUs and LSTMs. Finally, the ADAM optimizer with a learning rate of $1e^{-3}$ is used for training all networks, with a mini-batch size of $100$. The source code for carrying out the experiments is available at \url{https://github.com/nvecoven/BRC}.

\paragraph{Copy first input benchmark}
In this benchmark, the network is presented with a one-dimensional time-series of $T$ time-steps where $x_t \sim \mathcal{N}(0,1), ~\forall t \in T$. After receiving $x_T$, the network output value should approximate $x_0$, a task that is well suited for capturing their capacity to learn long temporal dependencies if $T$ is large. Note that this benchmark also requires the ability to filter irrelevant signals as, after time-step $0$, the networks are continuously presented with noisy inputs that they must learn to ignore. The mean square error on the test set is shown for different values of $T$ in Table~\ref{table:bench1}. For smaller values of $T$, all recurrent cells achieve similar performances. The advantage of using bistable recurrent cells appears when $T$ becomes large (Figure~\ref{fig:learning_dynamics}). Indeed, when $T$ is equal to $600$, only networks made of bistable cells are capable of outperforming random guessing threshold (which would be equal to $1$ in this setting \footnote{As $x_0$ is sampled from a normal distribution $\mathcal{N}(0,1)$, guessing $0$ would lead to the lowest error which would on average be equal to the standard deviation.}).

\begin{table}
\begin{center}
\begin{tabular}{|c||c|c||c|c|}
\hline
T & BRC & nBRC & GRU & LSTM \\ \hline\hline
$5$ & $0.0157 \pm 0.0124$& $0.0028 \pm 0.0023$& $0.0019 \pm 0.0011$& $\mb{0.0016 \pm 0.0009}$\\ \hline 
$50$ & $0.0142 \pm 0.0081$& $0.0009 \pm 0.0006$ &$\mb{0.0004 \pm 0.0003}$& $0.9919 \pm 0.0012$\\ \hline 
$100$ & $0.0046 \pm 0.0002$& $\mb{0.0006 \pm 0.0001}$& $0.0097 \pm 0.006$& $1.0354 \pm 0.0416$\\ \hline 
$300$ & $0.0013 \pm 0.0008$ &$\mb{0.0007 \pm 0.0002}$& $0.6743 \pm 0.4761$&$0.9989 \pm 0.0170$\\ \hline 
$600$ & $0.1581 \pm 0.1574$ & $\mb{0.0005 \pm 0.0001}$ &$0.9934 \pm 0.0182$ & $0.9989\pm 0.0162$\\ \hline 
\end{tabular}\end{center}\caption{Mean square error on test set after $30000$ gradient descent iterations of different architectures on the copy first input benchmark. Results are shown for different values of $T$.}\label{table:bench1}\end{table}

\begin{figure}
    \centering
    \includegraphics[width=0.7\textwidth]{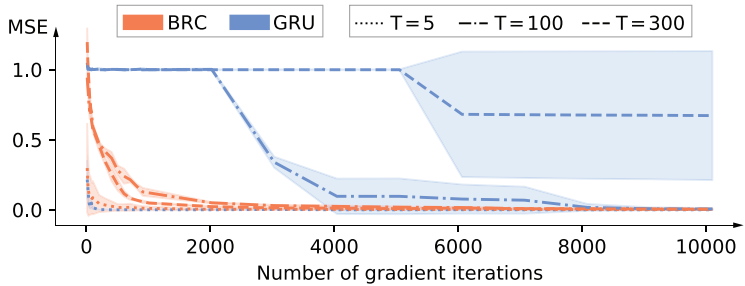}
    \caption{Evolution of the average mean-square error ($\pm$ standard deviation) over three runs on the copy input benchmark for GRU and BRC and for different values of $T$.}
    \label{fig:learning_dynamics}
\end{figure}

\paragraph{Denoising benchmark}
The copy input benchmark is interesting as a means to highlight the memorisation capacity of the recurrent neural network, but it does not tackle its ability to successfully exploit complex relationships between different elements of the input signal to predict the output. In the denoising benchmark, the network is presented with a two-dimensional time-series of $T$ time-steps. Five different time-steps $t_1,\ldots,t_5$ are sampled uniformly in $\{0,\ldots,T-N\}$ with $N \in \{0,\ldots,T-5\}$ and are communicated to the network through the first dimension of the time-series by setting $\mb{x}_t[1] = 0$ if $t\in[t_1,\ldots,t_5]$, $\mb{x}_t[1] = 1$ if $t=T$ and $\mb{x}_t[1] = -1$ otherwise.

Note that this dimension is also used to notify the network that the end of the time-series is reached (and thus, that the network should output its prediction). The second dimension is a data-stream, generated as for the copy first input benchmark, that is $\mb{x}_t[2] \sim \mathcal{N}(0,1), \forall t \in T$. At time-step $T$, the network is asked to output $[\mb{x}_{t_1}[2],\ldots,\mb{x}_{t_5}[2]]$.
The mean squared error is averaged over the $5$ values. That is, the error on the prediction is equal to $\sum_{i=1}^{5}\frac{(\mb{x}_{t_i}[2]-\mb{O}[i])^2}{5}$ with $\mb{O}$ the output of the neural network. Note that the parameter $N$ controls the length of the forgetting period as it forces the relevant inputs to be in the first $T-N$ time-steps. This ensures that $t_x < T-N,~\forall x \in \{1,\ldots,5\}$.

As one can see in Table~\ref{table:bench2} (generated with $T = 200$ and two different values of $N$), for $N = 200$, GRUs and LSTMs are unable to exceed random guessing (mean square error of $1$) whereas BRC and nBRC performances are virtually unimpacted. Also, Table~\ref{table:bench2} provides a very important observation. GRUs and LSTMs are, in fact, able to learn long-term dependencies, as they achieve extremely good performances when $N = 0$. In fact, all the samples generated when $N = 200$ could also be generated when $N = 0$, meaning that with the right parameters, the GRUs and LSTMs network could achieve good predictive performances on such samples. However, our results show that GRUs and LSTMs are unable to learn those parameters when the datasets are only composed of such samples. That is, GRUs and LSTMs need training datasets with some samples for which the memory required is quite short to learn efficiently, and allow for learning the samples for which the temporal structure is longer. Bistable cells on the other hand are not susceptible to this caveat.

\begin{table}
\begin{center}\begin{tabular}{|c||c|c||c|c|}
\hline
$N$ & BRC & nBRC & GRU & LSTM \\ \hline\hline
$0$ & $0.0014 \pm 0.0001$ & $0.0006 \pm 0.0001$ & $\mb{0.0001 \pm 0.0001}$ & $0.0003 \pm 0.0002$\\ \hline 
$200$ & $0.0032 \pm 0.0015$ & $\mb{0.0013 \pm 0.0006}$&$1.0571 \pm 0.0452$& $0.9878 \pm 0.0052$\\ \hline 
\end{tabular}\end{center}\caption{Mean square error on test set after $30000$ gradient descent iterations of different architectures on the denoising benchmark. Results are shown with and without constraint on the location of relevant inputs. Relevant inputs cannot appear in the $N$ last time-steps, that is $\mb{x}_t[1] = -1, \forall t > (T-N)$. In this experiment, results were obtained with $T = 400$.}\label{table:bench2}\end{table}

To further highlight this behavior, we design another benchmark that is a variant of the copy input benchmark. In this benchmark, the network is presented with a one-dimensional time-series of length $T = 600$ where $x_t = 0,~\forall t \in T \backslash t_1$ and $x_{t_1} \sim ~ \mathcal{N}(0,1)$, with $t_1$ chosen uniformly in $\{0,\ldots,599\}$. The network is tasked to output $x_{t_1}$.
Table~\ref{table:rand} shows that, using this training scenario, GRUs are capable of achieving a low MSE (around $0.04$) on the test set used for the original copy input benchmark in which all the $T = 600$. This was not the case in Table~\ref{table:bench1} (MSE around $1.0$ for $T = 600$), when trained on a datasets for which all the samples require a $600$ time-step-long dependency. On the other hand, the performance of BRC and nBRC peaked in this scenario. 

\begin{table}
\begin{center}\begin{tabular}{|c|c||c|c|}
\hline
BRC & nBRC & GRU & LSTM \\ \hline
$0.0010 \pm 0.0001$& $\mb{0.0001 \pm 0.0001}$&$0.0373 \pm 0.00371$& $0.3323 \pm 0.4635$\\ \hline 
\end{tabular}\end{center}\caption{Mean square error on test set after $30000$ gradient descent iterations of different architectures on the modified copy input benchmark.}\label{table:rand}\end{table}

\paragraph{Sequential MNIST}

In this benchmark, the network is presented with the MNIST images, shown pixel by pixel as a time-series. MNIST images are made of 1024 pixels (32 by 32), showing that BRC and nBRC can learn dynamics over thousands of time-steps. Similar to both previous benchmarks, we add $n_{black}$ time-steps of black pixels at the end of the time-series to add a forgetting period. Results are shown in Table~\ref{table:mnist} for two values of $n_{black}$, and are consistent with what has been observed in both previous benchmarks.

\begin{table}
\begin{center}\begin{tabular}{|c||c|c||c|c|}
\hline
$n_{black}$ & BRC & nBRC & GRU & LSTM \\ \hline\hline
$0$ & $0.9697 \pm 0.0020$ & $0.9601 \pm 0.0032$ &$\mb{0.9880 \pm 0.0014}$& $0.9651 \pm 0.0023$\\ \hline 
$300$ & $\mb{0.9760 \pm 0.0015}$ & $0.9608 \pm 0.0109$& $0.1081 \pm 0.0053$&$0.1124 \pm 0.0013$\\ \hline 
\end{tabular}\end{center}\caption{Accuracy on MNIST test set after $30000$ gradient descent iterations on different architectures on the MNIST benchmark. Images are fed to the recurrent network pixel by pixel. Results are shown for MNIST images with $n_{black}$ black pixels appended to the image.}\label{table:mnist}\end{table}

\subsection{Analysis of BRC dynamic behavior}
Until now, we have looked at the learning performances of bistable recurrent cells. It is however interesting to take a deeper look at the dynamics of such cells to understand whether or not bistability is used by the network. To this end, we pick a random time-series from the denoising benchmark and analyse some properties of $a_t$ and $c_t$. Figure~\ref{fig:dynamics} shows the proportion of bistable cells per layer and the average value of $e_t$ per layer. The dynamics of the parameters show that they are well used by the network, and three main observations owe to be made. First, as relevant inputs are shown to the network, the proportion of bistable neurons tends to increase in layers 2 and 3, effectively storing information and thus confirming the interest of introducing bistability for long-term memory. As more information needs to be stored, the network leverages the power of bistability by increasing the number of bistable neurons. Second, as relevant inputs are shown to the network, the average value $c_t$ tends to increase in layer 3, effectively making the network less and less sensitive to new inputs. Third, one can observe a transition regime when a relevant input is shown. Indeed, there is a high decrease in the average value of $c_t$, effectively making the network extremely sensitive to the current input, which allows for its efficient memorization.

\begin{figure}
\centering
  \includegraphics[width=1\linewidth]{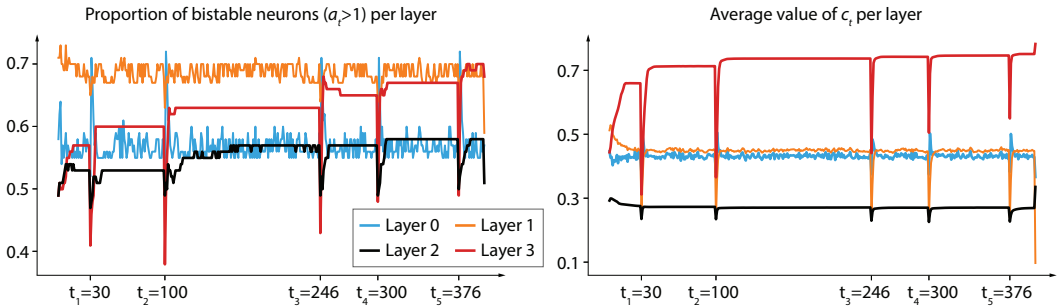}
\caption{Representation of the BRC parameters, per layer, of a recurrent neural network (with $4$ layers of $100$ neurons each), when shown a time-series of the denoising benchmark ($T = 400$, $N = 0$). Layer numbering increases as layers get deeper (i.e. layer i corresponds to the ith layer of the network). The $5$ time-steps at which a relevant input is shown to the model are clearly distinguishable by the behaviour of those measures alone.}
\label{fig:dynamics}
\end{figure}

\section{Conclusion}
In this paper, we introduced two new important concepts from the biological brain into recurrent neural networks: cellular memory and bistability. 
This lead to the development of a new cell, called the Bistable Recurrent Cell (BRC) that proved to be very efficient on several datasets requiring long-term memory and on which the performances of classical recurrent cells such as GRUs and LSTMS were limited.

Furthermore, by relaxing the cellular memory constraint and using a special rule for recurrent neuromodulation, we were able to create a neuromodulated bistable recurrent cell (nBRC) which is very similar to a standard GRU. This is of great interest and provides insights on how gates in GRUs and LSTMs, among others, could in fact be linked to what is neuromodulation in biological brains. As future work, it would be of interest to study some more complex and biologically plausible neuromodulation schemes and see what types of new, gated architectures could emerge from them.

\subsection*{Acknowledgements}
Nicolas Vecoven gratefully acknowledges the financial support of the Belgian FRIA.

\newpage

\bibliographystyle{vancouver}
\bibliography{sample}
\newpage
\appendix
\section{Proof of bistability for BRC and nBRC for $a_t>1$}
\label{app:bist_proof}

\begin{theorem}
The system defined by the equation
\begin{equation}
h_t = c h_{t-1} + (1 - c)\tanh(Ux_t+ah_{t-1}) = F(h_{t-1})
\end{equation}
with $c \in [0,1]$ is monostable for $a \in [0,1[$ and bistable for $a>1$ in some finite range of $Ux_t$ centered around $x_t=0$.
\end{theorem}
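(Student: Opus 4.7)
The plan is to analyze $F$ as a one-dimensional dynamical system in $h$, treating $Ux_t$ as a constant parameter. First I would observe that, whenever $c<1$, the fixed-point equation $h=F(h)$ simplifies to $h=\tanh(Ux_t+ah)$, so the set of fixed points is independent of $c$ and depends only on $a$ and $Ux_t$. The derivative used for stability is $F'(h)=c+(1-c)\,a\operatorname{sech}^2(Ux_t+ah)$, which is a convex combination of $1$ and $a\operatorname{sech}^2(\cdot)$; this identity will be the workhorse tying the fixed-point geometry of the map $h\mapsto\tanh(Ux_t+ah)$ to the stability of the full map $F$.

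For the monostable regime $a\in[0,1)$, I would set $\phi(h)=\tanh(Ux_t+ah)-h$ and note that $\phi'(h)=a\operatorname{sech}^2(Ux_t+ah)-1\le a-1<0$, so $\phi$ is strictly decreasing. Combined with $\phi(\pm\infty)=\mp\infty$, this gives a unique fixed point $h^*$ for every value of $Ux_t$. Using $|\tanh|<1$ at $h^*$ and the identity above, $F'(h^*)=c+(1-c)\,a(1-(h^*)^2)$ lies in $[0,1)$, hence $h^*$ is asymptotically stable and the system is monostable.

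For the bistable regime $a>1$, I would first handle $Ux_t=0$, exploiting the symmetry $F(-h)=-F(h)$. The function $g(h)=\tanh(ah)-h$ satisfies $g(0)=0$, $g'(0)=a-1>0$, and $g(h)\to -\infty$ as $h\to\infty$, so by the intermediate value theorem there is a unique positive root $h^*>0$, and by symmetry $-h^*$ is also a root; these are the only three fixed points. Since $g$ crosses zero from above at $h^*$, one has $g'(h^*)<0$, i.e.\ $a(1-(h^*)^2)<1$, so the identity for $F'$ yields $F'(\pm h^*)\in[0,1)$, while $F'(0)=c+(1-c)a>1$. Thus $\pm h^*$ are stable, $0$ is unstable, and bistability holds at $Ux_t=0$.

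Finally, to extend bistability to a finite range around $Ux_t=0$, I would apply the implicit function theorem to $h-\tanh(Ux_t+ah)=0$ at each of the three fixed points $\{-h^*,0,h^*\}$: the partial derivative with respect to $h$ is $1-a\operatorname{sech}^2(ah)$, which is nonzero at all three points (negative at $\pm h^*$, positive at $0$), so each fixed point persists as a smooth branch in $Ux_t$. Continuity of $F'$ then preserves the stability type on a neighborhood of $Ux_t=0$, giving bistability on a finite interval. The only subtle step I anticipate is the stability inequality $a(1-(h^*)^2)<1$ at the outer fixed points; this is where the proof really uses the strict concavity of $\tanh$ on $(0,\infty)$ rather than just its boundedness, and it is the piece most easily glossed over.
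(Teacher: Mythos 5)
Your proof is correct, but it takes a genuinely different route from the paper's. The paper verifies the algebraic conditions for a supercritical pitchfork bifurcation of $G(h)=h-F(h)$ at $(x_0,h_0,a)=(0,0,1)$ (zero value, first, second and $a$-derivatives; positive third derivative; negative cross derivative), determines the stability of the origin by linearizing $F$ there, and then invokes the pitchfork structure to assert the two stable branches $\pm h_1$ for $a>1$. You instead work globally and elementarily: you reduce the fixed-point equation to $h=\tanh(Ux_t+ah)$ (independent of $c$ when $c<1$), use the identity $F'(h^*)=c+(1-c)a(1-(h^*)^2)$ at fixed points, and get monostability for $a<1$ from strict monotonicity of $\tanh(Ux_t+ah)-h$, and bistability for $a>1$ from the strict concavity of $g(h)=\tanh(ah)-h$ on $(0,\infty)$, which gives both uniqueness of the outer fixed points and the strict stability inequality $a(1-(h^*)^2)<1$. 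What your approach buys: it proves bistability for \emph{all} $a>1$, not just in a neighborhood of the bifurcation value, and it is the only one of the two arguments that actually addresses the ``finite range of $Ux_t$'' clause in the statement, via the implicit function theorem. What the paper's approach buys is the classification of the bifurcation itself (supercritical pitchfork), which is informative for Figure 2A but is strictly local in $a$. Two trivial points to fix in your write-up: the signs in your persistence step are reversed ($1-a\operatorname{sech}^2(ah)$ equals $1-a<0$ at $h=0$ and is \emph{positive} at $\pm h^*$, not the other way around — harmless, since only nonvanishing is needed for the implicit function theorem); and, like the paper, you silently exclude the degenerate case $c=1$, where $F$ is the identity and every point is fixed.
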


\begin{proof}
We can show that the system undergoes a supercritical pitchfork bifurcation at the equilibrium point $(x_0,h_0)=(0,0)$ for $a=a_{pf}=1$ by verifying the conditions
\begin{eqnarray}
&G(h_{0})\big\rvert_{a_{pf}}=\frac{dG(h_{t})}{dh_t}\big\rvert_{h_{0},a_{pf}}=\frac{d^2G(h_{t})}{dh_t^2}\big\rvert_{h_{0},a_{pf}}=\frac{dG(h_{t})}{da}\big\rvert_{h_{0},a_{pf}} = 0\\
&\frac{d^3G(h_{t})}{dh_t^3}\big\rvert_{h_{0},a_{pf}} > 0, \frac{d^2G(h_{t})}{dh_tda}\big\rvert_{h_{0},a_{pf}} < 0
\end{eqnarray}
where $G(h_t) = h_t - F(h_t)$ (\cite{golubitsky2012}). This gives
\begin{eqnarray}
G(h_{0})\big\rvert_{a_{pf}}&=&(1-c)(h_0-\tanh(a_{pf}h_0)) = 0,\\
\frac{dG(h_{t})}{dh_t}\bigg\rvert_{h_{0},a_{pf}} &=& (1-c)(a_{pf}(\tanh^2(a_{pf}h_0)-1)+1) = (1-c)(1-a_{pf})=0,\\
\frac{d^2G(h_{t})}{dh_t^2}\bigg\rvert_{h_{0},a_{pf}} &=& (1-c)2a_{pf}^2\tanh(a_{pf}h_0)(1-\tanh^2(a_{pf}h_0))=0,\\
\frac{dG(h_{t})}{da}\bigg\rvert_{h_{0},a_{pf}} &=& (1-c)h_0(\tanh(a_{pf}h_0)^2 - 1)=0,\\
\frac{d^3G(h_{t})}{dh_t^3}\bigg\rvert_{h_{0},a_{pf}} &=& (1-c)*(2a^3(\tanh^2(a_{pf}h_0)-1)^2+4a_{pf}^3\tanh^2(a_{pf}h_0)(\tanh^2(a_{pf}h_0)-1)) \nonumber \\
 &=& 2(1-c) > 0,\\
\frac{d^2G(h_{t})}{dh_tda}\bigg\rvert_{h_{0},a_{pf}} &=& (1-c)((\tanh^2(a_{pf}h_0)-1)+2a_{pf}h_0\tanh(a_{pf}h_0)(1-\tanh^2(a_{pf}h_0)))\nonumber \\
&=& c-1 < 0.
\end{eqnarray}
The stability of $(x_0,h_0)$ for $a\neq1$ can be assessed by studying the linearized system
\begin{eqnarray}
h_t = \frac{dF(h_{t})}{dh_t}\bigg\rvert_{h_{0}} h_{t-1}. 
\end{eqnarray}
The equilibrium point is stable if $dF(h_{t})/dh_t \in [0,1[$, singular if $dF(h_{t})/dh_t=1$, and unstable if $dF(h_{t})/dh_t \in ]1,+\infty[$. We have
\begin{eqnarray}
\frac{dF(h_{t})}{dh_t}\bigg\rvert_{h_{0}} &=& c  + (1-c)a(1-\tanh^2(a_th_{0}))\\
&=& c + (1-c)a,
\end{eqnarray}
which shows that $(x_0,h_0)$ is stable for $a \in [0,1[$ and unstable for $a>1$.

It follows that for $a<1$, the system has a unique stable equilibrium point at ($x_0,h_0$), whose uniqueness is verified by the monotonicity of $G(h_{t})$ ($dG(h_{t})/dh_t>0 \forall h_t$).

For $a>1$, the point ($x_0,h_0$) is unstable, and there exist two stable points ($x_0,\pm h_1$) whose basins of attraction are defined by $h_t \in ]-\infty,h_0[$ for $-h_1$ and $h_t \in ]h_0,+\infty[$ for $h_1$.
\end{proof}
\end{document}